\newtheorem{theorem}{Theorem}
\DeclareMathOperator*{\argmin}{argmin}
\begin{document}

\title{Riemannian Geometry Approach for Minimizing Distortion and its Applications}

\author{Dror Ozeri \\
Rafael–Advanced Defense Systems
Israel \\ 
email: {droro@rafael.co.il} }

\date{}
\maketitle

\begin{abstract}
Given an affine transformation $T$, we define its Fisher distortion $Dist_F(T)$. We show that the Fisher distortion has Riemannian metric structure and provide an algorithm for finding mean distorting transformation - namely - for a given set $\{T_{i}\}_{i=1}^N$ of affine transformations, find an affine transformation $T$ that minimize the overall distortion  $\sum_{i=1}^NDist_F^{2}(T^{-1}T_{i}).$ The mean distorting transformation can be useful in some fields -in particular we apply it for rendering affine panoramas.
\end{abstract}

\section{Introduction}
Registration algorithms between different objects are common in computer vision and engineering. Most important usages are planar panoramas \cite{brown2007automatic}, point cloud and point set registration \cite{makovetskii2017affine, du2010affine}. \par
A common practice of the registration process is choosing one of the objects to be the reference object (i.e., the identity transformation). However, unlike rigid transformation, non-rigid transformation can visually distort the original object; choosing an arbitrary or wrong reference object may lead to significant distortions. 

\begin{figure}[H]
\includegraphics[width=\textwidth]{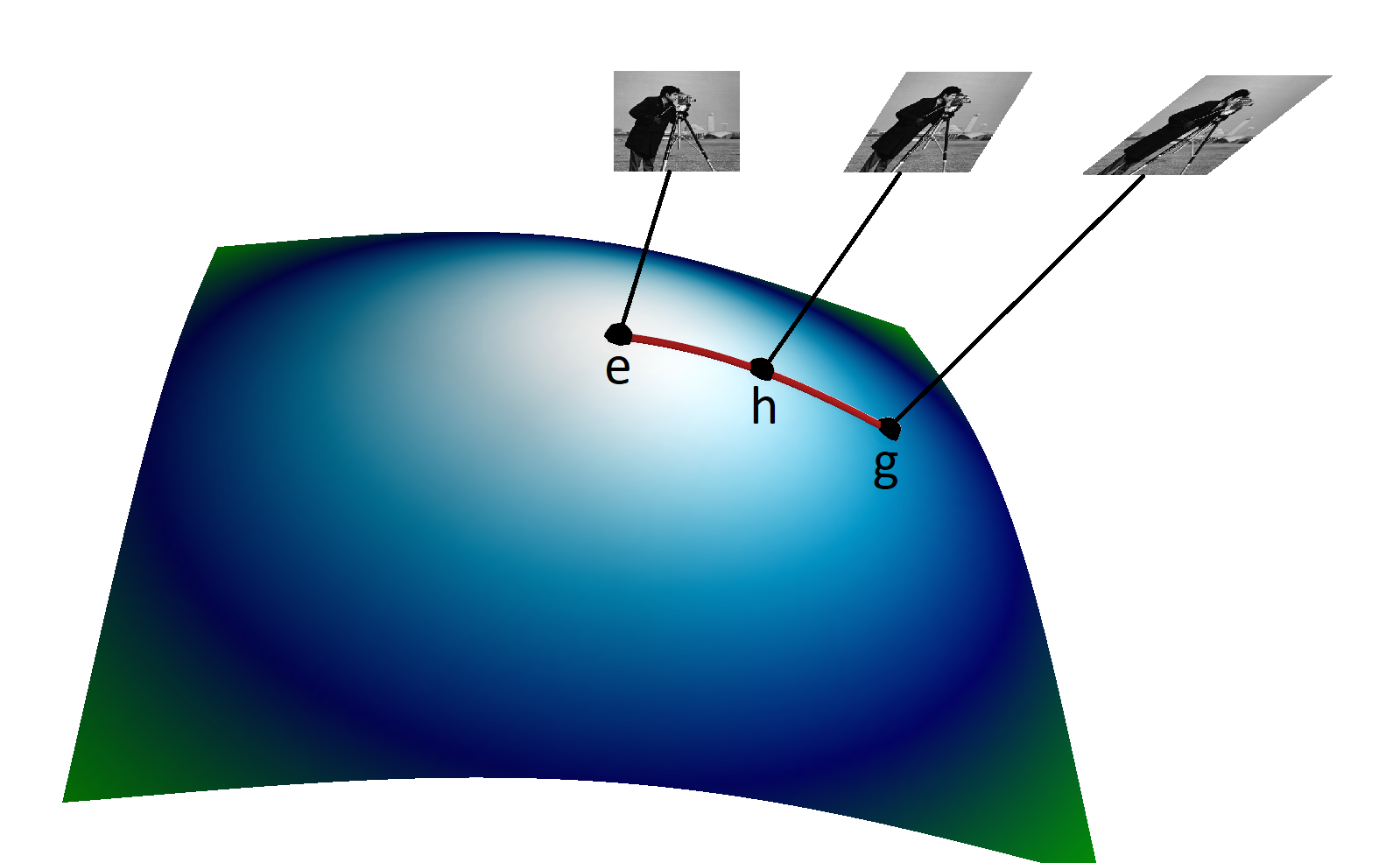}
\caption{ The space of lower triangular matrices with positive diagonal $L_2^+$. The Identity matrix $e$ is the origin and does not distort the image. The matrix  $h$ is located somewhere on the manifold and distorts the image. The matrix  $g$ 
is located further on the manifold and further distorts the image. The distortion is proportional to the distance symbolized by the red curve (see Theorem \ref{main_theorem}).}
\label{fig:main_fig}
\end{figure}

\begin{figure}[H]
\centering
\subfloat[]{\label{fig:a}\includegraphics[height=0.25\textwidth]{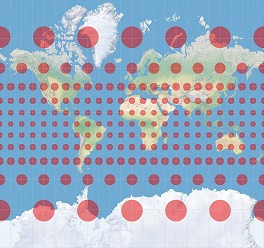}}
\subfloat[]{\label{fig:b}\includegraphics[height=0.25\textwidth]{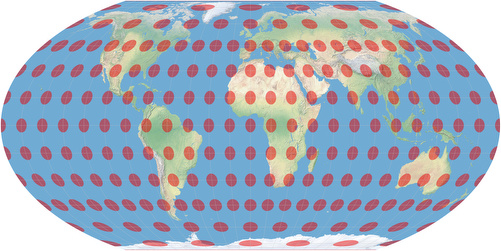}}

\caption{\textit{ Tissot's indicatrix} of the Mercator projection \textbf{(a)} and The Wagner IV projection \textbf{(b)} . The Mercator projection preserves angles but distorts areas, the Wagner IV projection preserves areas but distorts angles. Projection images were taken from \cite{projection}}
\label{fig:tissot}
\end{figure}

\subsection{Related Work}

\subsubsection{Choosing the Best Reference on Planar Panorama}
The problem of estimating the best reference image in a panorama is surprisingly understudied. In \cite{bellavia2015estimating}, Fabio Bellavia et al. investigated this issue with a different distortion measure. In \cite{capel2004image}, Capel suggests taking the central image in the panorama. Indeed in  many implementations of panorama this method is chosen (for example: OpenCV implementation \cite{bradski2008learning}).  \par 
\space
In this study, we show that a natural geometric notion of distortion can be defined and minimized, yielding a new reference plane that is not necessarily one of the initial planes. Moreover, the method is indifferent to the dimension of the space involved and can be applied not only for images. We focus only on the space of affine transformations. \par
As far as we know, reducing distortion using this method is a novel method and was never studied before.

\subsubsection{Map Projection Distortion}
For natural reasons, the concept of distortion has roots in the field of cartography (see \cite{krisztian2019comparing}). First Introduced by Nicolas Auguste Tissot as\textit{ Tissot's indicatrix}  (ellipse of distortion) in 1878 \cite{tissot1878memoire} (see Fig \ref{fig:tissot}). Airy, Kavrayskiy \cite{doi:10.1080/14786446108643179, kavrajskij1934matematiceskaja} and others developed several distortion measures; one of those criteria is the  popular Airy-Kavrayskiy criterion:

\begin{equation}
\label{eqn:airy_Kavrayskiy}
E_{AK} = \sqrt{\int\epsilon_{AK}^2d\sigma} =\sqrt{\int [log^2{\frac{a}{b}}+log^2{ab}]d\sigma} =  \sqrt{\int [log^2{a}+log^2{b}]d\sigma},
\end{equation}
where a $a$ and $b$ are the infinitesimal linear scales of the ellipse. The first term $\epsilon_a^2 = \log^2{\frac{a}{b}}$ measures the angular distortion and the second term  $\epsilon_p^2 = \log^2{ab}$ measures the area distortion (see \cite{krisztian2019comparing}). Laskowski \cite{laskowski1989traditional} has linked between $a$ and $b$ and the singular values of the local Jacobi matrix of the projection, yielding precisely the Fisher distortion that we define later in this paper. In the following sections, we will present an alternative geometric derivation of the Fisher distortion.

\section{Preliminaries and Definitions}
\subsection{Riemannian manifolds and the space of positive definite matrices}
The space of positive definite matrices $P_n^+$, has been the focus of much research in recent years. It has a rich Riemannian structure. Interesting metrics on $P_n^+$  include Euclidean, Inverse Euclidean, Cholesky-Euclidean, log-Cholesky \cite{lin2019riemannian} and the Fisher metric:

\begin{equation}
\label{eqn:fisher}
d_F(A,B) =\sqrt{||log(A^{-\frac{1}{2}}BA^{-\frac{1}{2}})||^2}=\sqrt{\sum_{i=1}^N\log^2(\lambda_{i}(A^{-1}B))},
\end{equation}
where $\lambda_i $ are the matrix eigenvalues \cite{bhatia2003exponential}. The Fisher metric is fundamental, holds a variety of important invariant properties and will be the main focus of this paper.

\subsection{Fréchet mean}
The Fréchet mean on a metric space is a generalization of the arithmetic mean (or center of mass) in Euclidean space. It is well known that for $\{x_i \in \mathbb{R}^n\}_{i=1}^N$ the arithmetic mean of the set $\overline{x}=\frac{\sum_{i=1}^N x_i}{N}$ minimize the expression $\sum_{i=1}^N||x-x_i||^2$. A Fréchet mean of a set of points $\{\nu_i \in M\}_{i=1}^N$ in a metric space $M$ is an element of the metric space that globally minimize the expression $\sum_{i=1}^Nd^2(x,\nu_i)$, where $d$ is the metric. Fréchet means are used in a variety of scientific and mathematical applications.\par 
The space $P_n^+$ endowed with the Fisher metric holds an important property relevant for our case -  Every finite subset of $P_n^+$ has a unique Fréchet mean (see \cite{BINI20131700}).

\subsection{Cholesky Decomposition}
It is known that every positive definite matrix $P$ has a decomposition of $P=LL^t, $ where $L$ is a lower triangular matrix; This is called the Cholesky decomposition of the matrix. This decomposition is unique if we restrict the diagonal elements of $L$ to be positive, namely the Lie group $L_n^{+}$. Furthermore, the map $\Phi:x\to xx^t$ is a diffeomorphism between the group $L_n^{+}$ and the $P_n^{+}$ as Riemannian manifolds. We call the map $\Phi$ - the Cholesky map, and $\Phi^{-1}(P)$ is called the Cholesky factor of $P$.

\subsection{Pullback of a metric}
A common notion in Riemannian geometry is a pullback of a metric. The pullback of a metric is a new metric defined on the preimage of a map. In particular we are interested in the pullback of the Fisher metric into the space $L_n^+$ using the Cholesky map:

\begin{equation}
\Phi^*d_F (l_i,l_j) = d_F(\Phi(l_i), \Phi(l_j)).
\end{equation}

\subsection{QR Decomposition}
Throughout this paper we will use the following version of the QR decomposition:

\begin{theorem}\label{qr_theorem}
Any matrix $A\in GL_n(\mathbb{R})$ can be decomposed \textbf{uniquely} to $A=LQ$ where $L\in L_n^+$ and $Q\in O_n$ and $O_n$ is the set of orthogonal transformation of $\mathbb{R}^n$.
\end{theorem}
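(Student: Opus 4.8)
The plan is to establish existence by a Gram--Schmidt orthogonalization performed on the \emph{rows} of $A$, and then to deduce uniqueness from a group-theoretic intersection argument. Write $A$ in terms of its rows $a_1,\dots,a_n \in \mathbb{R}^n$; since $A \in GL_n(\mathbb{R})$ these rows are linearly independent. The decomposition $A = LQ$ with $L$ lower triangular and $Q$ having orthonormal rows $q_1,\dots,q_n$ is equivalent to requiring that each $a_i$ lie in $\operatorname{span}(q_1,\dots,q_i)$, i.e.\ $a_i = \sum_{j=1}^{i} L_{ij}\, q_j$; this is precisely what Gram--Schmidt produces.

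First I would run Gram--Schmidt on $a_1,\dots,a_n$: set $\tilde q_i = a_i - \sum_{j<i}\langle a_i, q_j\rangle q_j$ and $q_i = \tilde q_i / \|\tilde q_i\|$. Linear independence of the rows guarantees $\tilde q_i \neq 0$ at every step, so each normalization is legitimate, and the diagonal coefficient $L_{ii} = \|\tilde q_i\|$ is \emph{strictly positive} --- exactly the positive-diagonal constraint defining $L_n^+$. The off-diagonal entries $L_{ij} = \langle a_i, q_j\rangle$ for $j<i$ fill out the strictly-lower part while $L_{ij}=0$ for $j>i$, so $L \in L_n^+$; stacking the $q_i$ as rows gives $Q \in O_n$, and by construction $A = LQ$.

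For uniqueness, suppose $A = L_1 Q_1 = L_2 Q_2$ with $L_1,L_2 \in L_n^+$ and $Q_1, Q_2 \in O_n$. Then $L_2^{-1} L_1 = Q_2 Q_1^{-1}$. Since $L_n^+$ is a group, the left-hand side lies in $L_n^+$; the right-hand side lies in $O_n$. The key observation is that $L_n^+ \cap O_n = \{e\}$: an orthogonal matrix $M$ satisfies $M^{-1} = M^t$, and for lower-triangular $M$ the inverse $M^{-1}$ is lower triangular whereas $M^t$ is upper triangular, forcing $M$ to be diagonal; orthonormality then makes the diagonal entries $\pm 1$, and the positivity constraint selects all $+1$, i.e.\ $M = e$. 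Hence $L_1 = L_2$ and $Q_1 = Q_2$.

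I expect the genuinely delicate point to be bookkeeping rather than depth: one must keep the row-convention consistent (lower-triangular $L$ corresponds to Gram--Schmidt on rows, as opposed to the textbook $QR$ form obtained from columns, which one could alternatively reach by transposing a standard decomposition of $A^t$ and fixing signs) and verify that the positive-diagonal normalization is exactly what pins down the orientation and removes the residual sign ambiguity. Everything else --- nonvanishing of the Gram--Schmidt residuals and the $L_n^+ \cap O_n = \{e\}$ computation --- follows directly from the invertibility of $A$ and the group structure of $L_n^+$ noted above.
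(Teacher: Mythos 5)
Your proof is correct and complete. The paper itself states this theorem without proof, treating it as a known variant of the QR decomposition, so there is no argument of the paper's to compare against; your Gram--Schmidt-on-rows construction for existence, together with the group-theoretic uniqueness argument via $L_n^+ \cap O_n = \{e\}$, is the standard and fully rigorous way to establish exactly the statement the paper relies on (equivalently obtainable by transposing the usual column-wise factorization of $A^t$ and fixing signs, as you note).
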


\subsection{Definition of the Fisher Distortion}
Let $f(\textbf{x}):\mathbb{R}^n\to\mathbb{R}^n$ be an affine transformation, then $f(\textbf{x})=A\textbf{x}+\textbf{b}$, for some $A\in GL_{n}(\mathbb{R})$ and $\textbf{b}\in \mathbb{R}^n$.
The Fisher distortion of $f$ is

\begin{equation}
Dist_F(f)=Dist_F(A)=\sqrt{\sum_{i=1}^N\log^2(\sigma_{i})},
\end{equation}
where $\sigma_{i}$ are $A$ singular values.

\section{Mean Distorting Transformation Problem}
The mean distorting transformation [MDT] problem is the following:
Let $\{A_i \in GL_{n}(\mathbb{R})\}_{i=1}^N$. Find $\Lambda(A_{1},A_{2},...,A_{n}) \in  GL_{n}(\mathbb{R})$ that satisfies:

\begin{equation}
\Lambda(A_{1},A_{2},...,A_{N}) = \argmin_{A\in GL_n(\mathbb{R})} \sum_{i=1}^NDist_F^{2}(A^{-1}A_i).
\end{equation}.

In section \ref{algorithm_section} we will show that $\Lambda(A_{1},A_{2},...,A_{N})$ is unique up to multiplication by the right with orthogonal transformation.

\subsection{Motivation for the Fisher Distortion}
Besides being studied in the map projection domain, two main properties make the Fisher distortion a natural candidate for the Mean Distorting Transformation Problem. \newline
1. \textbf{Rigid Invariant} - The Fisher Distortion is invariant under left and right multiplication by orthogonal transformations. i.e  
\begin{equation}
Dist_F(q_1 \circ T \circ q_2 )= Dist_F(T),
\forall q_1,q_2 \in O_n.
\end{equation}
This property is simply a consequence of the invariance of the singular values under orthogonal matrix multiplication. \newline
2. \textbf{Geometric Mean Property} - Let $\{D_i \in D_n^+\}_{i=1}^N$ be a set of diagonal matrices with \textbf{positive} entries.  Then it is easy to show that the MDT of this set is simply the geometric mean of the matrices:
\begin{equation}
T = \left(\prod_{i=1}^ND_i\right)^{\frac{1}{N}}.
\end{equation}.

\subsection{Relation between the Fisher metric and Fisher Distortion}

Our main result is the following Theorem:
\begin{theorem} \label{main_theorem}
Let $\Phi^*d_F$ be the pullback metric on $L_n^+$ of the Fisher metric $d_F$ defined on $P_n^+$, then 

\begin{equation}\
\Phi^*d_F(l_i, l_j)=2Dist_F(l_i^{-1}l_j)\; \forall l_i,l_j\in L_n^+.
\end{equation}

\end{theorem}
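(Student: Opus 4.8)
The plan is to unwind both sides into statements about eigenvalues and singular values and verify they agree up to the factor of $2$. First I would expand the left-hand side directly from the definitions. By definition of the pullback and of the Cholesky map $\Phi(x)=xx^t$, we have $\Phi^*d_F(l_i,l_j)=d_F(l_il_i^t,\,l_jl_j^t)$. Setting $A=l_il_i^t$ and $B=l_jl_j^t$ and using the second form of the Fisher metric in \eqref{eqn:fisher}, this becomes $\sqrt{\sum_k\log^2(\lambda_k(A^{-1}B))}$. Thus the entire problem reduces to identifying the eigenvalues of
\[
A^{-1}B=(l_il_i^t)^{-1}(l_jl_j^t)=l_i^{-t}l_i^{-1}l_jl_j^t .
\]

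The key step is a similarity argument that replaces this product by a symmetric one. Conjugating by $l_i^t$ gives
\[
l_i^t\,(A^{-1}B)\,l_i^{-t}=l_i^{-1}l_j\,l_j^tl_i^{-t}=(l_i^{-1}l_j)(l_i^{-1}l_j)^t=MM^t,
\]
where $M:=l_i^{-1}l_j$. Since conjugation preserves the spectrum, the eigenvalues of $A^{-1}B$ coincide with those of $MM^t$. But $MM^t$ is symmetric positive definite, and its eigenvalues are precisely the squares of the singular values of $M$, so $\lambda_k(A^{-1}B)=\sigma_k^2(M)$ for every $k$.

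Finally I would substitute back and extract the factor of $2$. Using $\log(\sigma_k^2)=2\log\sigma_k$, we obtain
\[
\Phi^*d_F(l_i,l_j)=\sqrt{\sum_k\log^2\!\big(\sigma_k^2(M)\big)}=\sqrt{\sum_k 4\log^2\!\big(\sigma_k(M)\big)}=2\sqrt{\sum_k\log^2\!\big(\sigma_k(M)\big)}=2\,Dist_F(M),
\]
which is exactly $2\,Dist_F(l_i^{-1}l_j)$ by the definition of the Fisher distortion. The argument is essentially a chain of standard identities, so I do not anticipate a serious obstacle; the only place demanding care is the similarity manipulation relating $A^{-1}B$ to $MM^t$, together with the bookkeeping that turns eigenvalues of $MM^t$ into squared singular values of $M$ and thereby produces the factor of $2$.
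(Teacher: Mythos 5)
Your proof is correct and follows essentially the same route as the paper's: the paper first establishes left invariance of $\Phi^*d_F$ via exactly your conjugation trick (showing $\Phi(l_kl_i)^{-1}\Phi(l_kl_j)$ is similar to $\Phi(l_i)^{-1}\Phi(l_j)$), reduces to $d_F\bigl(I,\Phi(l_i^{-1}l_j)\bigr)$, and then concludes with the same identities $\lambda_k\bigl(MM^t\bigr)=\sigma_k^2(M)$ and $\log\sigma^2=2\log\sigma$. The only difference is organizational: you inline the invariance step by conjugating $A^{-1}B$ directly by $l_i^t$, whereas the paper states it as a standalone left-invariance property of the pullback metric.
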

\begin{proof}
First we show that $\Phi^*d_F$ is left invariant metric: 
\newline 
Let  $l_k \in L_n^+$ , then by definition of $\Phi^*d_F$:

\begin{equation}
\begin{aligned}
\Phi^*d_F(l_kl_i,l_kl_j)=d_F(\Phi(l_kl_i),\Phi(l_kl_j)).
\end{aligned}
\end{equation}
But,
\begin{equation}
\begin{aligned}
d_F(\Phi(l_kl_i),\Phi(l_kl_j))=\sqrt{\sum_{m=1}^N\log^2(\lambda_{m}(\Phi(l_kl_i)^{-1}\Phi(l_kl_j))}.
\end{aligned}
\end{equation}
Observe that
\begin{equation}
\begin{aligned}
\Phi(l_kl_i)^{-1} \Phi(l_kl_j)=(l_k^t)^{-1}\Phi(l_i)^{-1}\Phi(l_j)l_k^t,
\end{aligned}
\end{equation}
the matrices similarity means that both admit the same eigenvalues and by \eqref{eqn:fisher} we have
\begin{equation}
\Phi^*d_F(l_kl_i,l_kl_j)=\Phi^*d_F(l_i,l_j), 
\end{equation}
which means the metric $\Phi^*d_F$ is left invariant.
\newline
Now, 
\begin{equation}
\begin{aligned}
&\Phi^*d_F(l_i,l_j)=\Phi^*d_F(I,l_i^{-1}l_j)= d_F(I,\Phi(l_i^{-1}l_j))=\\ &\sqrt{\sum_{m=1}^N\log^2(\lambda_{m}(\Phi(l_i^{-1}l_j)))}.
\end{aligned}
\end{equation}

By simple proprieties of the singular values, we have: \newline $\sigma_m^2(l_i^{-1}l_j)=\lambda_m(\Phi(l_i^{-1}l_j)$), Finally,
\begin{equation}
\begin{aligned}
& \Phi^*d_F(l_i,l_j)=\sqrt{\sum_{m=1}^N\log^2(\lambda_{m}(\Phi(l_i^{-1}l_j))} = \\ 
& 2\sqrt{\sum_{m=1}^N\log^2(\sigma_{m}(l_i^{-1}l_j))}=2Dist_F(l_i^{-1}l_j).
\end{aligned}
\end{equation}
\end{proof}

A conclusion of Theorem \ref{main_theorem} is that Fisher distortion of a matrix in $L_n^+$ is proportional to its distance from the identity matrix (see Fig \ref{fig:main_fig}).

\subsection{Algorithm for finding MDT}\label{algorithm_section}
Let $\{A_i \in GL_n(\mathbb{R}\}_{i=1}^N$ and let $T\in GL_{n}(\mathbb{R})$. The Fisher distortion is invariant under orthogonal  transformation, therefore we can use QR decomposition of Theorem \ref{qr_theorem} and decompose $A_i= L_{i}Q_{i}$ , $T=LQ$ where $L, L_{i}\in L_{n}^+$ and $Q, Q_{i}\in O_{n}$.\newline
Note that $\sum_{k=1}^N Dist_F^{2}(A^{-1}A_i) = \sum_{k=1}^N  Dist_F^{2}(L^{-1}L_i)$.
From Theorem \ref{main_theorem}, we derive that finding an MDT is equivalent to finding a Fréchet mean on the space $L_n^+$ w.r.t the metric $\Phi^*d_F$. 
For simplicity notice that
\begin{equation}
\begin{aligned}
&Mean_{\Phi^*d_F} (L_1,...,L_n)=\\
&\Phi^{-1}[Mean_{d_F}(\Phi(L_1),..,\Phi(L_n)] = \\ &\Phi^{-1}[Mean_{d_F}(A_1A_1^t,..,A_nA_n^t)].
\end{aligned}
\end{equation}
Therefore calculation of the Fréchet mean can be performed on the space $P_n^+$ - resulting in the following simple algorithm:

\begin{algorithm}
    \caption{Mean Distorting Transformation Algorithm}\label{alg:MDT_algo}
    \SetKwInOut{KwIn}{Input}
    \SetKwInOut{KwOut}{Output}
    \KwIn{A set of linear transformations $ \{ A_i \in GL_n (\mathbb{R}) \}_{i=1}^N$ }
    \KwOut{ $ T \in GL_n(\mathbb{R})$, where $T$ minimize $\sum_{i=1}^N {Dist_F^{2}(T^{-1}A_i)}$.}
	Calculate $ P_i = A_iA_i^t$.\\
	Find the Fréchet mean $ \overline{P}$ of  $\{ P_i \}_{i=1}^N$ w.r.t Fisher metric. \\
	$T=\Phi^{-1}(\overline{P})$.\\
\end{algorithm}

In step 2, note that the Fréchet mean on the space of $P_n^+$ has no explicit formula; however, there are several algorithms for obtaining a numerical solution. For example - gradient based algorithms \cite{BINI20131700} and multiplicative power means (MPM) algorithm \cite{7809176}. \par
While our algorithm supplies a unique solution, notice that any matrix $A$ satisfying $AA^t=\overline{P}$ is also a solution. The latter can only happen if  $A=TQ$, where $Q$ is any orthogonal matrix.

\section{Results and Applications}
\subsection{Affine Panorama}
One of the immediate applications of Algorithm \ref{alg:MDT_algo} is in the affine panorama flow. A complete affine panorama flow is described in \cite{brown2007automatic}. In this flow, images $\{I_i\}_{i=1}^N$ of a particular scene are registered to each other by affine transformations $\{T_i\}_{i=1}^N$, where $T_i$ is affine transformation mapping $I_i$ to a common plane. Classically, the transformations are obtained by solving the system of equations $T_i\Vec{p}_{ik} = T_j\Vec{q}_{jk}$ (in the least squares sense), where $\Vec{p}_{ik}$ and $\Vec{q}_{jk}$ are corresponding points in the $I_i$ and $I_j$ images. \par
When solving this system of equations, we have the degree of freedom of multiplying by global affine transformation $T$. Particular choices of $T$ are taking $T=T_1^{-1}$ or $T=T_c^{-1}$ where $c$ is the index of the the central frame in the panorama.
A more natural and symmetric choice is using Algorithm \ref{alg:MDT_algo} on the linear part of the affine transformations $\{A_{i}\}_{i=1}^N$ where $T_{i}\vec{v}=A_{i}\vec{v}+\vec{b}$. After finding the MDT solution, a rigid global correction can be applied to the images in the following way: \newline
First we apply a global rotation  $q=exp[-\frac{\sum_{i=1}^Nlog(q_i)}{N}]$, where $r_iq_i$ is the QR-factorization of $A_i$. Afterward, we apply the unique global shift transformation $S$ that aligns the panorama with the overall image axis, see Fig \ref{fig:panorama_example} for panorama comparison between panorama generated with a reference fixed image and MDT method.

\subsection{3D Point cloud registration}
Less researched than its rigid counterpart, non-rigid point cloud registration appears in the literature, and various affine point cloud registration algorithms were proposed (see \cite{du2010affine, du2008affine}). Similarly to the 2D case, an MDT on the affine transformations can reduce the overall distortion of both point sets. Note that our method has no dimensionality assumption and that Algorithm \ref{alg:MDT_algo} should work well on any dimension.

\section{Further Work}
The natural extension of this work is to find a mean distorting transformation for non-affine transformation. If $f:\mathbb{R}^n\to \mathbb{R}^n$ is $C^1$, we can use the affine definition of the Fisher distortion of the linearization of $f$ (Jacobi matrix $J_f$):
\begin{equation}
Dist_F(f) = \sqrt{ \int Dist_F^2( J_f) d\mu}
\end{equation}
In this case, the above definition is similar to the Airy-Kavrayskiy criterion distortion measure of map projection. \par
The generalized MDT statement of the problem: \newline
Let $\{f_i:D_i\to D\}_{i=1}^N$ be a set of $C^1$ functions. Find a  $C^1$ invertible function $f:D\to D$ that minimizes the overall distortion:
\begin{equation}
\sum _{i=1}^N Dist_F^2(f^{-1} \circ f_{i}).
\end{equation}

A compelling case to study is the case where $f_i\in PGL(3,\mathbb{R})$ are 2D-homographies. Those studies will enable us to apply an MDT method on \textbf{planar} panoramas, which are a generalization of affine panoramas.

\section{Acknowledgments}
The author would like to thank Adi Shasha for fruitful discussions on this topic.

\begin{figure}[H]
\centering
\begin{tabular}{c }
\includegraphics[width=0.5\textwidth]{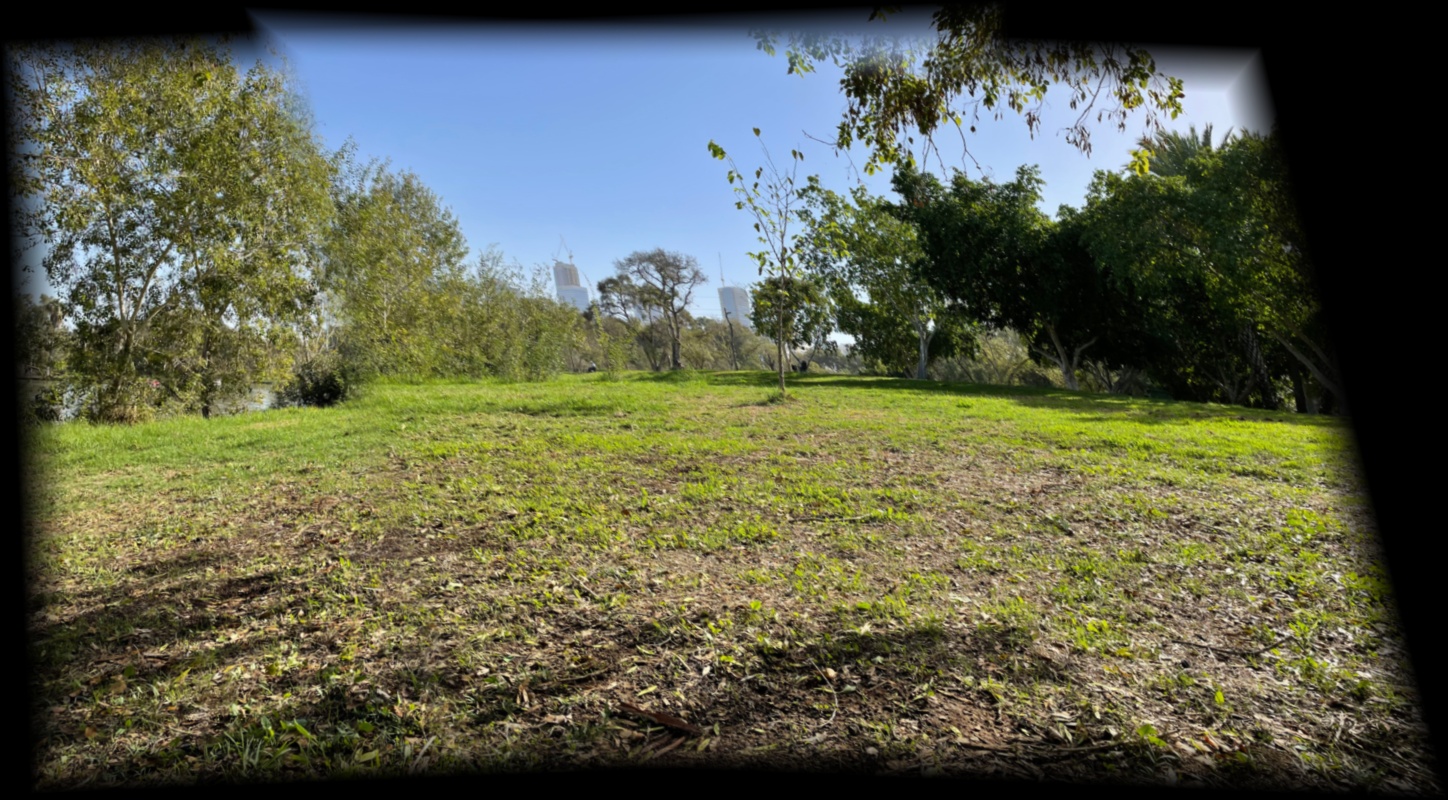} \\
(a)  \\
\includegraphics[width=0.5\textwidth]{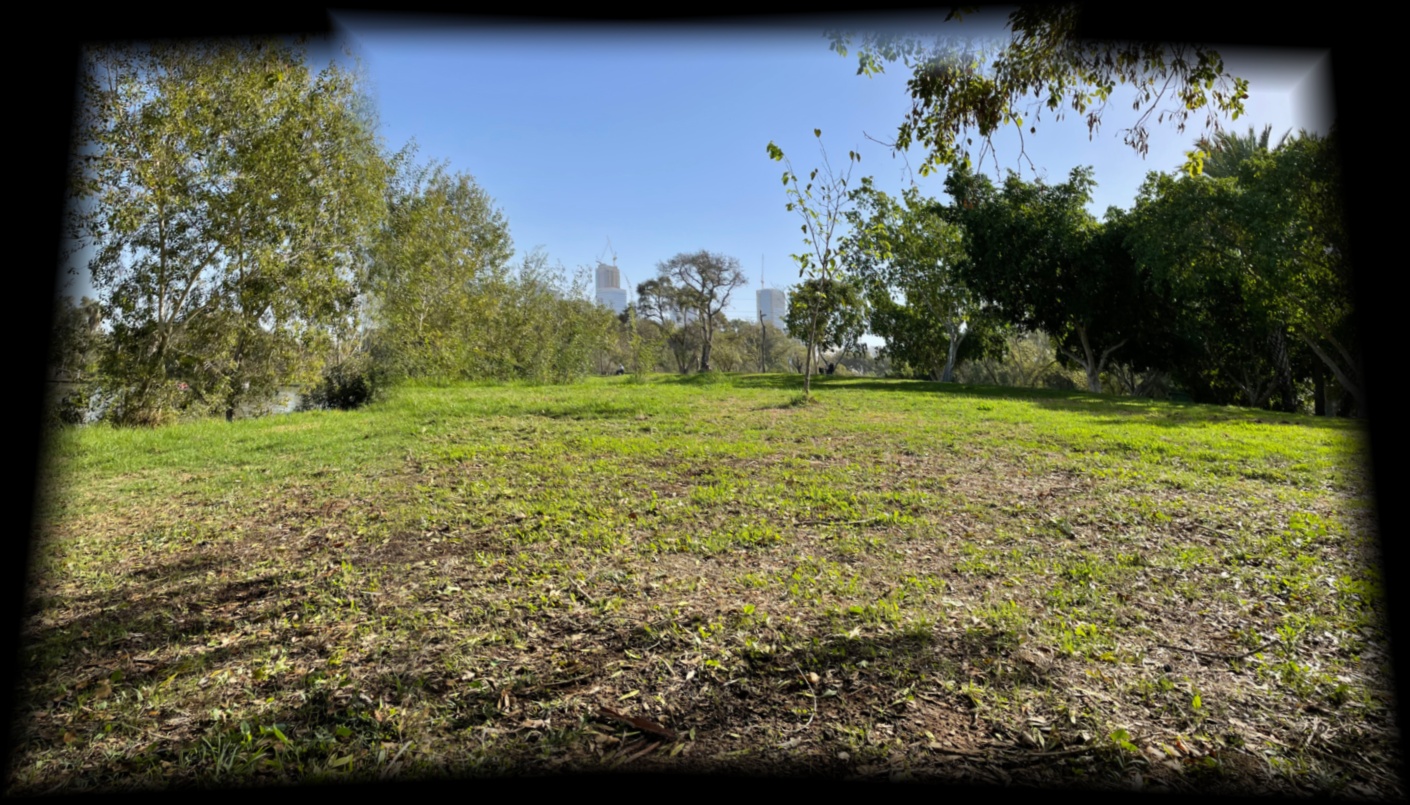} \\
(b) \\
\includegraphics[width=0.8\textwidth]{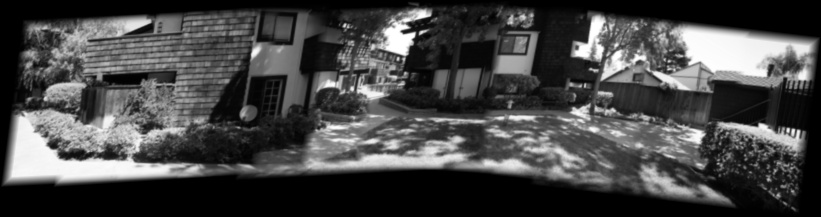}\\
(c) \\
\includegraphics[width=0.8\textwidth]{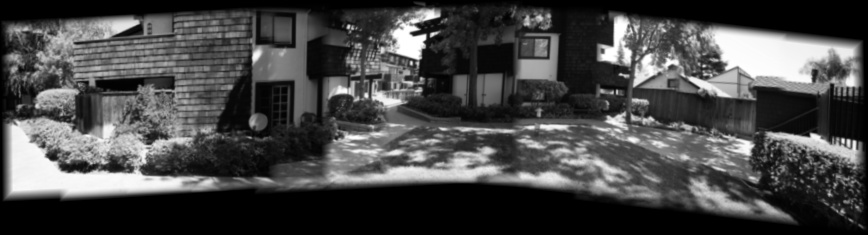}\\
(d)
\end{tabular}
\caption{Visual comparison of panoramas between the fixed reference image method and Mean Distorting Transformation (MDT) correction applied on the fixed reference method: (a) Park panorama (2 images), left image is fixed; (b) MDT correction. (c) Yard Panorama (9 images), rightmost image is fixed. (d) MDT correction.
Raw images of (c) and (d) where taken from Adobe Panorama dataset (see
\cite{brandt2010transform}).
}
\label{fig:panorama_example}
\end{figure}

\bibliographystyle{plain} 

\bibliography{egbib}

\end{document}